\documentclass{article}

\usepackage[preprint]{neurips_2026}

\usepackage[utf8]{inputenc}
\usepackage[T1]{fontenc}
\usepackage{hyperref}
\usepackage{url}
\usepackage{booktabs}
\usepackage{amsfonts}
\usepackage{amsmath}
\usepackage{amssymb}
\usepackage{amsthm}
\usepackage{mathtools}

\newtheorem{theorem}{Theorem}
\newtheorem{proposition}{Proposition}
\usepackage{nicefrac}
\usepackage{microtype}
\usepackage{graphicx}
\usepackage{xcolor}
\usepackage{multirow}
\usepackage{subcaption}

\newcommand{\Vstar}{V^{*}}
\newcommand{\Qsafe}{Q^{*}_{\text{safe}}}
\newcommand{\Qrisky}{Q^{*}_{\text{risky}}}
\newcommand{\lambdastar}{\lambda^{*}}
\newcommand{\SCAT}{S_{\text{cat}}}
\newcommand{\VCAT}{V_{\text{cat}}}
\newcommand{\Dw}{\Delta_{w}}
\newcommand{\Dl}{\Delta_{\ell}}

\newcommand{\Ds}{\Delta_{s}}
\newcommand{\EE}{\mathbb{E}}

\title{Prospect-Theory Behavior from Bellman Optimality\\ in MDPs with Catastrophic States}

\author{%
  Yujiao Chen\\
  Massachusetts Institute of Technology\\
  Cambridge, MA 02139, USA\\
  \texttt{yujiaoch@mit.edu}
}

\begin{document}

\maketitle

\begin{abstract}
We study risk-neutral control in Markov decision processes with an absorbing catastrophic state. Even though rewards are linear and the agent has no utility curvature, probability weighting, or framing dependence, standard Bellman optimality produces three prospect-theory-like signatures: an S-shaped value-function profile (convex near catastrophe, concave in the far field), an endogenous loss-sensitivity coefficient $\lambdastar(S) > 1$, and a reflection-effect policy reversal. Across 495 parameter configurations, the optimal policy plays safe near catastrophe in positive-drift (growth) regimes despite the risky action's higher immediate expected value, and plays risky near catastrophe in negative-drift (decline) regimes despite the safe action's lower immediate expected loss. We derive a closed-form expression for the asymptotic loss-aversion plateau $\bar{\lambda} = \lim_{S \to \infty} \lambdastar(S)$ that depends only on win probability $p$, payoff asymmetry $r = |\Dl/\Dw|$, and discount factor $\beta$, and matches numerical solutions to $R^2 = 0.999$. The mechanism does not require asymmetric payoffs. Across a sweep of $(p, \beta)$ at three asymmetry levels, the asymmetry share of $\bar{\lambda}$ above unity has median $4.6\%$ at $r = 1.25$ and rises to $13.9\%$ at $r = 2$, with the boundary contribution exceeding the asymmetry contribution in every $(p, \beta, r)$ cell tested. The phenomena persist under tabular Q-learning---a model-free agent reproduces $\Vstar$ at correlation $0.98$ in growth and $1.00$ in decline---and under stochastic transitions with Gaussian, heavy-tailed Student-$t_3$, and asymmetric skew-normal noise up to $50\%$ of the step size, where the asymptotic plateau tracks the closed-form prediction within $0.41\%$ for safe-channel noise and within $9.6\%$ for risky-channel or both-channel noise. These results identify absorbing failure states as a sufficient structural mechanism for prospect-theory-like behavior under optimal control.
\end{abstract}

\section{Introduction}

Sequential decisions made near absorbing failure states can produce strong asymmetries between gains and losses even for risk-neutral agents. We formalize this observation in a Markov decision process with linear rewards and an absorbing catastrophe boundary, and show that standard Bellman optimality alone generates three prospect-theory-like signatures: an S-shaped value-function profile (convex near catastrophe, concave in the far field), an endogenous loss-sensitivity coefficient, and a reflection-effect policy reversal. The mechanism is purely dynamic: continuation values make downside risk disproportionately consequential near irreversible failure, even when the agent has no utility curvature, probability weighting, or framing dependence.

Prospect theory \citep{kahneman1979prospect, tversky1992advances} documents three robust patterns in human decision-making under risk: an S-shaped value function (concave for gains, convex for losses); loss aversion ($\lambda \approx 2.25$, so that losses loom roughly twice as large as equivalent gains); and the reflection effect (risk aversion in gains, risk seeking in losses). These patterns are typically interpreted as evidence for nonlinear preferences or distorted probability perception. We do not argue that the present mechanism explains human behavior, nor do we claim to reproduce prospect theory in full; probability weighting, framing, and reference dependence remain outside the model. Our contribution is structural and normative---a class of environments in which prospect-theory-like patterns are optimal under standard Bellman objectives---and is meant to sit alongside the psychological accounts rather than displace them.

Consider a decision-maker choosing between a safe action (deterministic payoff $\Ds$) and a risky action (win $\Dw$ with probability $p$, lose $\Dl$ with probability $1{-}p$). Two natural regimes arise. In \textbf{growth} ($\EE[\text{risky}] > \Ds > 0$), both actions improve the agent's position, but risky offers higher expected value. In \textbf{decline} ($\EE[\text{risky}] < \Ds < 0$), both actions deteriorate the agent's position, but safe deteriorates less. Local expected-value comparisons therefore favor risky in growth and safe in decline. An absorbing catastrophe boundary, however, can invert these local rankings: the continuation value depends on how close the agent is to irreversible failure, and that proximity can outweigh the one-step expected payoff. Near catastrophe in growth, the agent may optimally lock in smaller gains to avoid a large downside jump; near catastrophe in decline, the safe action only leads to certain ruin more slowly, so optimal play may be to gamble. The same Bellman equation thus produces risk aversion in gains and risk seeking in losses as a consequence of environment structure alone. Catastrophe risk---an effectively absorbing threshold corresponding to outcomes like death, bankruptcy, or exclusion---is the class of environments we focus on.

\subsection{Contributions}

\textbf{1. Three prospect-theory signatures from Bellman optimality.} Across 495 configurations, standard Bellman optimality with linear rewards produces an S-shaped value-function profile in growth, state-dependent loss sensitivity ($\lambdastar(S) > 1$), and a reflection-effect policy reversal across growth and decline (Section~\ref{sec:discovery}).

\textbf{2. The reflection effect arises from continuation values, not preferences.} The Bellman-optimal agent plays safe near catastrophe in growth despite risky's higher \emph{immediate} expected value, and gambles near catastrophe in decline despite safe's lower \emph{immediate} expected loss. The same agent and objective yield opposite risk attitudes when environmental drift changes sign.

\textbf{3. A closed-form formula for asymptotic loss aversion, with the boundary alone sufficient.} Theorem~\ref{thm:closedform} gives $\bar{\lambda} = (z^{-r} - 1)/(1 - z)$ where $z$ solves $\beta[pz + (1{-}p)z^{-r}] = 1$; Proposition~\ref{prop:scale} establishes scale invariance, and Proposition~\ref{prop:lambda1} proves $\bar{\lambda} > 1$ for all $r \geq 1$ in growth. The formula matches simulations at $R^2 = 0.999$. Across a $(p, \beta) \times r$ sweep, the boundary contribution exceeds the asymmetry contribution in every one of $48$ cells tested (Section~\ref{sec:scaling}).

\textbf{4. Robustness to learning and stochasticity.} The theoretical phenomena survive tabular Q-learning (correlation $0.98$ in growth, $1.00$ in decline) and stochastic transitions under Gaussian, heavy-tailed, and asymmetric noise up to $50\%$ of the step size, where the asymptotic plateau deviates from the closed-form prediction by at most $9.6\%$ (Section~\ref{sec:validation}).

\subsection{Related work}

Closest to our work are literatures on sequential decision-making with ruin or absorbing failure. \emph{Risk-sensitive MDPs} \citep{howard1972risk, borkar2002risk} modify the objective by introducing nonlinear utility or exponential transforms; we keep a standard risk-neutral Bellman objective and show that the environment alone induces prospect-theory-like behavior. \emph{Kelly criterion} \citep{kelly1956new} and classical ruin analyses study survival and bet sizing under ruin but do not derive a state-dependent loss-aversion coefficient or the joint appearance of an S-shaped value function and the reflection effect. \emph{Ergodicity economics} \citep{peters2019ergodicity} emphasizes how absorbing losses and multiplicative dynamics alter long-run growth, which is closely related in spirit, but does not characterize the geometry of $\Vstar$ we study here. \emph{Evolutionary accounts} \citep{robson2001biological, hintze2015risk} argue that risk attitudes can be adaptive but do not derive them as exact Bellman-optimal policies. \emph{Reference-dependent models} \citep{koszegi2006model} take loss aversion as primitive; here a prospect-theory-like $\lambdastar(S)$ is endogenous. \emph{Safe RL} \citep{garcia2015comprehensive, altman1999constrained, saunders2018trial} studies absorbing failure states, constraints, and safety guarantees near catastrophe, but its emphasis is typically on constraint satisfaction rather than the geometry of $\Vstar$ and the behavioral signatures it implies.

A growing empirical literature finds that the loss-aversion coefficient varies systematically with the structure of the gamble. Meta-analyses by \citet{brown2024meta} and \citet{yechiam2025loss} report that estimated $\lambda$ depends on the gain/loss range used in elicitation; \citet{walasek2015makeloss} show experimentally that loss aversion can be attenuated or reversed by manipulating the symmetry of the choice set; \citet{yechiam2013losses} and \citet{mukherjee2017magnitude} document attentional and reference-point mechanisms tied to gamble structure rather than to a stable preference parameter; and \citet{gal2018loss} synthesize evidence against a universal $2{:}1$ ratio. Our closed-form formula \eqref{eq:master} speaks directly to these patterns by isolating a boundary-induced contribution that is present even at $r = 1$ and quantifying how intrinsic asymmetry adds to it. To our knowledge, no prior work isolates a structural mechanism that jointly produces the S-shaped value-function profile, endogenous loss aversion, and the reflection effect under standard Bellman optimality, and characterizes that mechanism analytically.

\section{Model}
\label{sec:model}

We study a catastrophe-boundary MDP with state space $\mathcal{S} = \{\SCAT, \SCAT{+}1, \ldots\}$ (the upper bound $\bar{S}$ is a computational truncation; all theoretical statements are for the semi-infinite extension with $\bar{S} = \infty$). Without loss of generality we fix $\SCAT = 0$. Two actions are available: \emph{safe} (deterministic transition $S' = S + \Ds$) and \emph{risky} ($S' = S + \Dw$ with probability $p$, $S' = S + \Dl$ with probability $1{-}p$, where $\Dw > 0 > \Dl$). The state $\SCAT$ is an absorbing catastrophe with $\Vstar(\SCAT) = \VCAT \leq 0$. Rewards are linear in the state increment, the discount factor is $\beta \in (0,1)$, and the agent is risk-neutral. The Bellman optimality equation is
\begin{equation}
\Vstar(S) \;=\; \max\Bigl\{\underbrace{\Ds + \beta\,\Vstar(S + \Ds)}_{\Qsafe(S)},\; \underbrace{p\bigl[\Dw + \beta\,\Vstar(S + \Dw)\bigr] + (1{-}p)\bigl[\Dl + \beta\,\Vstar(S + \Dl)\bigr]}_{\Qrisky(S)}\Bigr\},
\label{eq:bellman}
\end{equation}
with $\Vstar(S) = \VCAT$ for $S \leq \SCAT$. To quantify the asymmetry induced by the value function, we define the \emph{endogenous loss sensitivity} as the ratio of value drops to value gains,
\begin{equation}
\lambdastar(S) \;=\; \frac{|\Vstar(S) - \Vstar(S + \Dl)|}{|\Vstar(S + \Dw) - \Vstar(S)|},
\label{eq:lambda}
\end{equation}
the MDP analogue of the prospect-theory loss-aversion coefficient $\lambda$. We emphasize that $\lambdastar(S)$ is a value-function slope ratio, not a utility ratio, and should not be interpreted as a direct estimate of the psychological parameter $\lambda$; the comparison we draw is structural. Unlike a primitive preference parameter, $\lambdastar(S)$ is state-dependent: it varies across the state space, peaks near the policy boundary in growth environments, and converges in the far field to $\bar{\lambda} = \lim_{S \to \infty} \lambdastar(S) > 1$.

We study two regimes: \textbf{growth} ($\EE[\text{risky}] > \Ds > 0$), where both actions improve the agent's state but risky offers higher immediate expected value, and \textbf{decline} ($\EE[\text{risky}] < \Ds < 0$), where both actions deteriorate the state but safe has lower immediate expected loss. A local one-step comparison that ignores continuation values would therefore favor risky in growth and safe in decline. We show that standard Bellman optimality with an absorbing boundary systematically \emph{overturns} these local rankings near catastrophe. We solve the system by value iteration to tolerance $10^{-10}$ across 495 parameterizations organized into 14 experimental series (Appendix~\ref{app:params}).

\section{Prospect-theory signatures from Bellman optimality}
\label{sec:discovery}

\subsection{Growth environments: S-curve and endogenous loss aversion}
\label{sec:growth}

In growth, near the catastrophe boundary, continuation values make the downside jump disproportionately costly, so the Bellman-optimal agent chooses safe despite the risky action's higher immediate expected return. The resulting $\Vstar(S)$ is S-shaped---convex near catastrophe and concave in the far field (Figure~\ref{fig:growth_example})---a consequence of the Bellman equation rather than an assumption. Three structural features of the S-shape have closed-form explanations: a convex safe zone, a curvature kink at the policy boundary $|\Dl|$, and fading scallops in the far field.

\paragraph{Convexity in the safe zone ($S < |\Dl|$).} Within the safe zone, the agent takes the safe action at every state, so $\Vstar$ satisfies the recurrence $\Vstar(S) = \Ds + \beta\,\Vstar(S + \Ds)$. Telescoping from state $S$ to the safe-zone edge yields
\begin{equation}
\Vstar(S) \;=\; \frac{\Ds(1 - \beta^d)}{1 - \beta} \;+\; \beta^d\,\Vstar(S + d\,\Ds), \qquad d = \left\lceil \frac{|\Dl| - S}{\Ds} \right\rceil,
\label{eq:safe_zone}
\end{equation}
where $d$ is the number of safe steps required to exit the safe zone. (When $\Ds$ divides $|\Dl| - S$---in particular for $\Ds = 1$, as in our figures---$S + d\,\Ds = |\Dl|$ exactly.) Because the risky zone offers far higher growth ($\EE[\text{risky}] \gg \Ds$), we have $\Vstar(S + d\,\Ds) \gg \Ds / (1 - \beta)$, and the dominant term $\beta^d\,\Vstar(S + d\,\Ds)$ is convex in $S$: each step toward the risky zone is exponentially more valuable than the last, since the high-return risky action is one fewer discounted step away.

\paragraph{Curvature kink at $S = |\Dl|$.} At this threshold, the risky action's loss outcome $S + \Dl$ transitions from landing inside the catastrophe region ($S + \Dl \leq \SCAT$ for $S < |\Dl|$) to landing strictly above it ($S + \Dl > \SCAT$ for $S > |\Dl|$). Below $|\Dl|$, each risky loss is catastrophic, creating the steep convex escape ramp. Strictly above $|\Dl|$, the loss outcome has finite value, and diminishing returns from distance dominate---the curve bends concave. The transition is a discrete change in loss consequences manifest as a finite jump in the discrete second difference $\Delta^2 \Vstar(S) = \Vstar(S{+}1) - 2\Vstar(S) + \Vstar(S{-}1)$ at $S = |\Dl|$, where the optimal action switches from safe to risky. We refer to this policy-boundary discontinuity as the S-curve's ``inflection'' for brevity.

\paragraph{Scalloping in the far field.} In the risky zone, $\Vstar(S)$ depends on $\Vstar(S + \Dl)$ through the Bellman equation. As $S$ crosses each multiple $k|\Dl|$, the loss destination $S + \Dl = (k{-}1)|\Dl|$ crosses a boundary where $\Vstar$ has a curvature discontinuity inherited from the previous generation. At $S = k|\Dl|$, catastrophe is reachable in exactly $k$ consecutive losses (probability $(1{-}p)^k$), so each additional $|\Dl|$ of distance adds one more ``loss buffer.'' The amplitude of these scallops decays geometrically: the concavity spike at the $k$-th multiple is attenuated by a factor of $\rho^{|\Dl|} = z^{r}$ per generation, where $z \in (0,1)$ is the unique decaying root of the characteristic equation derived in Section~\ref{sec:scaling}. For the baseline ($p = 0.65$, $\beta = 0.95$, $r = 1.25$), this gives $z^{r} \approx 0.53$, in good agreement with the empirically measured ratios of consecutive $V''$ peaks. The scallops fade into the smooth concave far field.

The S-shape directly induces an endogenous loss asymmetry. The loss sensitivity $\lambdastar(S)$ peaks near the safe-zone boundary and settles to a plateau $\bar{\lambda} > 1$ in the far field (Figure~\ref{fig:growth_example}, center). The numerical value depends on $(p, \beta, r)$; the closed-form Section~\ref{sec:scaling} identifies the boundary-induced contribution that is present even at $r = 1$.

\begin{figure}[t]
\centering
\includegraphics[width=0.95\textwidth]{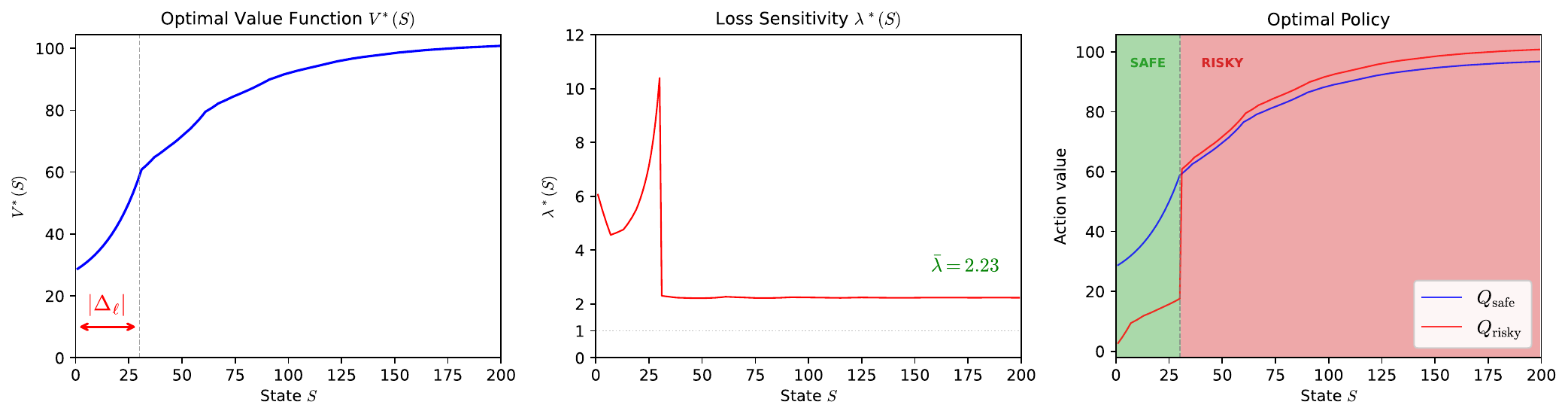}
\caption{\textbf{Growth environment: Bellman-optimal caution near catastrophe} ($\Ds = +1$, $\Dw = +24$, $\Dl = -30$, $p = 0.65$, $\beta = 0.95$, $\VCAT = -100$). \emph{Left:} value-function profile $\Vstar(S)$, convex near catastrophe and concave far away. \emph{Center:} $\lambdastar(S)$ plateaus at $\approx 2.23$. \emph{Right:} the agent voluntarily plays safe for 30 states ($= |\Dl|$) even though the risky action has higher immediate expected value.}
\label{fig:growth_example}
\end{figure}

\subsection{Decline environments: the hail Mary effect}
\label{sec:decline}

In decline ($\EE[\text{risky}] < \Ds < 0$), both actions deteriorate the state, but safe has lower immediate expected loss. A local comparison therefore favors safe. Near catastrophe, however, the Bellman-optimal policy reverses: the safe action is no refuge because it leads to certain ruin only more slowly, so the agent gambles because a risky win is the only action that can push catastrophe further into the future, and discounting makes deferred catastrophe strictly preferable. We refer to this as the \emph{hail Mary} effect (risk-seeking-near-ruin); it parallels risk seeking in losses from prospect theory. Because both actions have negative expected drift, the absorbing boundary is reached with probability one under any policy \citep{puterman1994markov}; the optimal policy maximizes expected discounted reward by \emph{delaying}---not escaping---ruin. In all 69 decline configurations tested, the states nearest catastrophe are \emph{always} risky.

Unlike in growth, an S-curve does not arise in decline. Instead, $\Vstar$ is a concave staircase (Figure~\ref{fig:decline_example}). Two structural differences from growth explain this pattern.

\paragraph{No convex region.} In growth, convexity arose because the safe zone acted as a discounted approach ramp to the high-value risky zone (equation~\ref{eq:safe_zone}). In decline, the near-boundary zone is instead the hail Mary region, where the agent takes the \emph{risky} action. The risky action's value near catastrophe is dominated by the constant catastrophe penalty: $\Qrisky(S) = p[\Dw + \beta\,\Vstar(S + \Dw)] + (1{-}p)[\Dl + \beta\,\VCAT]$. Since $\VCAT$ is a fixed constant, $\Vstar(S)$ near the boundary varies only through $\Vstar(S + \Dw)$, which changes slowly because $S + \Dw$ sits far from catastrophe. There is no exponential amplification as in the growth safe zone, so no convexity appears.

\paragraph{Staircase at $|\Ds|$ intervals.} Away from the boundary, the agent takes the safe action, which jumps $|\Ds|$ states toward catastrophe. This creates a modular recurrence, $\Vstar(S) = \Ds + \beta\,\Vstar(S - |\Ds|)$, linking every state to the one $|\Ds|$ steps closer. At every multiple of $|\Ds|$, the safe-action destination crosses a policy boundary inherited from the hail Mary region, producing a sharp rise followed by a flat plateau---the staircase. The step height decays by exactly $\beta$ per period, producing overall concavity: the marginal value of distance from catastrophe diminishes as $\beta^k$, so $\Vstar$ has a concave envelope.

\begin{figure}[t]
\centering
\includegraphics[width=0.95\textwidth]{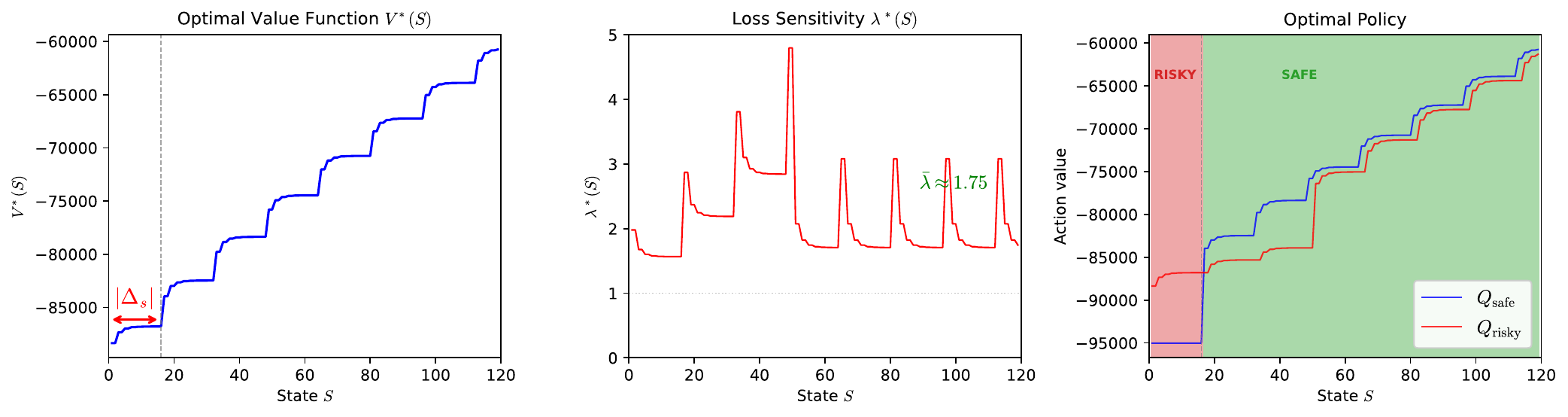}
\caption{\textbf{Decline environment: the hail Mary effect} ($\Ds = -16$, $\Dw = +30$, $\Dl = -50$, $p = 0.40$, $\beta = 0.95$, $\VCAT = -100{,}000$). \emph{Left:} staircase-shaped $\Vstar(S)$. \emph{Center:} periodic $\lambdastar(S)$. \emph{Right:} the agent gambles desperately in the 16 states nearest catastrophe---risk seeking in losses. A local expected-loss comparison would favor safe, yet the Bellman-optimal policy gambles.}
\label{fig:decline_example}
\end{figure}

\subsection{The reflection effect is optimal policy switching}
\label{sec:reflection}

The same Bellman equation and the same risk-neutral objective produce opposite risk attitudes depending on environmental drift, overturning opposite local action rankings in the two regimes:

\begin{center}
\small
\begin{tabular}{lccc}
\toprule
 & Local benchmark & \textbf{Bellman-optimal policy} & PT analogue \\
\midrule
Growth ($\EE[\text{risky}] > \Ds > 0$) & Risky everywhere & \textbf{Safe near boundary} & Risk aversion in gains \\
Decline ($\EE[\text{risky}] < \Ds < 0$) & Safe everywhere & \textbf{Risky near boundary} & Risk seeking in losses \\
\bottomrule
\end{tabular}
\end{center}

This is the reflection effect as a property of continuation values rather than preference curvature.

\section{A closed-form formula for asymptotic loss aversion}
\label{sec:scaling}

The loss sensitivity $\lambdastar(S)$ is not constant: it spikes near the safe-zone boundary and decays to an asymptotic plateau $\bar{\lambda}$ in the far field. We derive a closed-form expression for this terminal value, validate it numerically, and then use it to decompose $\bar{\lambda}$ into a boundary contribution (the symmetric case) and an asymmetry contribution.

\begin{theorem}[Closed-form asymptotic loss aversion]
\label{thm:closedform}
Under the model of Section~\ref{sec:model} and the growth condition $\EE[\text{risky}] > \Ds > 0$, the limit $\bar{\lambda} = \lim_{S \to \infty} \lambdastar(S)$ exists and satisfies
\begin{equation}
\bar{\lambda} \;=\; \frac{z^{-r} - 1}{1 - z}\,,\qquad \beta\bigl[p\,z + (1{-}p)\,z^{-r}\bigr] = 1\,,\qquad z \in (0,1),
\label{eq:master}
\end{equation}
where $r = |\Dl|/\Dw$ and $z$ is the unique root of the characteristic equation in $(0,1)$.
\end{theorem}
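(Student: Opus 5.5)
The plan is to reduce the far field to a linear, constant-coefficient recurrence, solve it by exponential ansatz, and read $\bar{\lambda}$ off the dominant decaying mode. Write $\mu = p\Dw + (1{-}p)\Dl = \EE[\text{risky}]$. I take integer increments, as in the computational model. The $\Ds$ terms must drop out of the final formula, so the first step is to show that in growth the risky action is optimal for all $S \geq S_0$, for some finite $S_0$.

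\textbf{Step 1: risky is eventually optimal.} Far from the boundary the ruin probability under the all-risky policy is tiny, so $\Vstar(S) \to \mu/(1-\beta)$. The safe action forgoes $\mu - \Ds > 0$ per step. A comparison of $\Qsafe(S)$ and $\Qrisky(S)$ with $\Vstar(S')$ within $o(1)$ of $\mu/(1-\beta)$ then gives $\Qrisky > \Qsafe$ for all large $S$. For $S \geq S_0$ we therefore have
\[
\Vstar(S) = \mu + \beta\bigl[p\,\Vstar(S+\Dw) + (1{-}p)\,\Vstar(S+\Dl)\bigr].
\]
This is a linear recurrence with the constant particular solution $\mu/(1-\beta)$.

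\textbf{Step 2: exponential ansatz.} Substituting $W(S) = \Vstar(S) - \mu/(1-\beta) = \rho^{S}$ into the homogeneous part gives $1 = \beta[p\rho^{\Dw} + (1{-}p)\rho^{\Dl}]$. Setting $z = \rho^{\Dw}$, so that $\rho^{\Dl} = z^{-r}$, yields exactly the characteristic equation in \eqref{eq:master}. Uniqueness of the root follows from convexity. The map $f(z) = \beta[pz + (1{-}p)z^{-r}]$ is strictly convex on $(0,\infty)$, with $f(0^+) = +\infty$ and $f(1) = \beta < 1$, so it has exactly one root $z \in (0,1)$; a second root lies in $(1,\infty)$.

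\textbf{Step 3: dominance of the real root $z$.} This is the main obstacle. One has to show that $W(S) = -C\rho^{S}(1 + o(1))$ with $C > 0$ and $\rho = z^{1/\Dw}$. That is, the real root must dominate every other mode, and its coefficient must not vanish. The first attempt would be a martingale argument rather than the general theory of difference equations. Under the all-risky policy, $M_n = \beta^{n}\rho^{S_n}$ is a martingale, by the characteristic equation. Optional stopping at the exit time $\tau$ from $[S_0,\infty)$ gives $\EE_S[\beta^{\tau}\rho^{S_\tau}] = \rho^{S}$; the overshoot below $S_0$ is at most $|\Dl|$, so $\rho^{S_\tau}$ is bounded above and below, and hence $\EE_S[\beta^\tau] \asymp \rho^{S}$. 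Writing
\[
W(S) = \EE_S\bigl[\beta^{\tau}\, g(S_\tau)\bigr],
\]
with $g(s) = \Vstar(s) - \mu/(1-\beta) < 0$ on the finite window $[S_0 - |\Dl|, S_0)$, shows that $W(S)$ is of exact order $-\rho^{S}$. It remains to upgrade $\asymp$ to $\sim C\rho^{S}$. For this, I would use a change of measure with the exponential tilt by $\rho$ and $\beta$. The tilted walk has drift toward the boundary, and a renewal theorem for the overshoot distribution shows that the law of $S_\tau$ converges, so $W(S)\rho^{-S} \to -C$. For lattice steps the limit should be taken along residue classes mod $\gcd(\Dw, |\Dl|)$; the ratio in Step 4 cancels these class-dependent constants, since $S$, $S+\Dw$ and $S+\Dl$ share a residue class.

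\textbf{Step 4: compute the limit.} Substituting the asymptotic form into \eqref{eq:lambda}, the numerator becomes $C\rho^{S}(\rho^{\Dl} - 1)(1 + o(1)) = C\rho^{S}(z^{-r} - 1)(1 + o(1))$. The denominator becomes $C\rho^{S}(1 - \rho^{\Dw})(1 + o(1)) = C\rho^{S}(1 - z)(1 + o(1))$. Their ratio tends to $(z^{-r}-1)/(1-z)$, which proves existence of the limit and the formula \eqref{eq:master}.
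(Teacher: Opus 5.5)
Your proposal is correct and follows the paper's proof step for step: eventual optimality of the risky action, reduction to the homogeneous recurrence for $\Vstar - \EE[\text{risky}]/(1-\beta)$, the convexity argument for the unique root in $(0,1)$, dominant-mode asymptotics, and the ratio computation. The paper simply cites Feller's renewal-theoretic asymptotics for the dominant-mode step, whereas you supply the argument (optional stopping with $g<0$ on the exit window for the two-sided bound and a nonzero constant, then an exponential tilt and the overshoot renewal theorem); your residue-class caveat modulo $\gcd(\Dw,|\Dl|)$ also repairs the paper's single-constant claim $u(S)\sim C(\rho^*)^S$, which can fail when that gcd exceeds $1$ (e.g.\ $\gcd(24,30)=6$ in Figure~\ref{fig:growth_example}), and you correctly note that the ratio defining $\bar{\lambda}$ is unaffected.
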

\begin{proof}
\emph{Far-field optimality of risky.} As $S \to \infty$, $\Vstar(S) \to V_\infty := \EE[\text{risky}]/(1-\beta)$ (the boundary-free value of always playing risky). Hence $\Qrisky(S) - \Qsafe(S) \to \EE[\text{risky}] - \Ds > 0$ by the growth hypothesis, so risky is uniquely optimal for all $S$ beyond some $S_0$.

\emph{Linear recurrence.} Define $u(S) = V_\infty - \Vstar(S) \geq 0$. For $S \geq S_0$, the Bellman equation reduces to $u(S) = \beta[p\,u(S+\Dw) + (1-p)\,u(S+\Dl)]$, a linear homogeneous recursion with non-negative coefficients.

\emph{Characteristic equation and unique decaying real root.} The exponential ansatz $u(S) = C\rho^S$ yields $f(\rho) := \beta[p\rho^{\Dw} + (1-p)\rho^{\Dl}] = 1$. Each summand $\rho^a$ is convex for $a$ a positive integer or $a < 0$ (so $a(a-1) \geq 0$), hence $f$ is convex on $\rho > 0$; it diverges as $\rho \to 0^+$ and as $\rho \to \infty$, and equals $\beta < 1$ at $\rho = 1$. Therefore $f$ crosses $1$ exactly once in $(0,1)$, at a unique real root $\rho^* \in (0,1)$.

\emph{Dominant-mode asymptotics.} The recursion is a finite-span linear difference equation with strictly positive coefficients. By standard renewal-theoretic / Perron-root asymptotics for such recurrences \citep[e.g.,][Ch.~XI]{feller1971introduction}, any non-negative solution that decays to zero is asymptotically proportional to the minimal positive real root of the characteristic equation. Hence
\begin{equation*}
u(S) \sim C\,(\rho^*)^S
\end{equation*}
for some $C > 0$, where $\rho^* \in (0,1)$ is the unique root identified above.

\emph{Limit existence and value.} From $u(S) \sim C(\rho^*)^S$, the ratios $u(S+\Dw)/u(S) \to (\rho^*)^{\Dw}$ and $u(S+\Dl)/u(S) \to (\rho^*)^{\Dl}$. Substituting into $\lambdastar(S) = (u(S+\Dl) - u(S))/(u(S) - u(S+\Dw))$ yields $\lambdastar(S) \to ((\rho^*)^{\Dl} - 1)/(1 - (\rho^*)^{\Dw})$. The substitution $z = (\rho^*)^{\Dw} \in (0,1)$, with $r = |\Dl|/\Dw$ giving $(\rho^*)^{\Dl} = z^{-r}$, produces the displayed expression.
\end{proof}

\begin{proposition}[Scale invariance]
\label{prop:scale}
$\bar{\lambda}$ depends on $(\Dw, \Dl)$ only through the ratio $r = |\Dl|/\Dw$.
\end{proposition}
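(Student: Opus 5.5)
The plan is to read the statement directly off Theorem~\ref{thm:closedform}. That theorem expresses $\bar{\lambda}$ through the pair of equations \eqref{eq:master}, and those equations involve only $(p,\beta,r)$ once the root is reparametrized as $z=(\rho^*)^{\Dw}$. So the proof reduces to two checks. First, the defining equation for $z$ must depend on $(\Dw,\Dl)$ only through $r$. Second, the root it selects must be well defined as a function of $(p,\beta,r)$ alone.

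\textbf{Step 1 (rewriting the characteristic equation).} Start from $f(\rho)=\beta[p\rho^{\Dw}+(1-p)\rho^{\Dl}]=1$. Substitute $z=\rho^{\Dw}$; since $\Dw>0$, the map $\rho\mapsto\rho^{\Dw}$ is a strictly increasing bijection of $(0,1)$ onto itself. Because $\Dl=-r\Dw$, we have $\rho^{\Dl}=(\rho^{\Dw})^{-r}=z^{-r}$. The equation therefore becomes $g(z):=\beta[pz+(1-p)z^{-r}]=1$, which involves $(\Dw,\Dl)$ only through $r$.

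\textbf{Step 2 (uniqueness of the root, stated intrinsically in $z$).} Uniqueness should be argued directly for $g$, so that the selected $z$ is visibly a function of $(p,\beta,r)$ and not of the original parametrization. On $(0,\infty)$ the function $g$ is strictly convex, because $z^{-r}$ is strictly convex for $r>0$ and $pz$ is linear. Moreover $g(z)\to\infty$ as $z\to0^+$, and $g(1)=\beta<1$. Hence $g$ has exactly one root in $(0,1)$. By the bijection in Step~1, this root equals $(\rho^*)^{\Dw}$ for the unique root $\rho^*$ of Theorem~\ref{thm:closedform}, so the two descriptions agree.

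\textbf{Step 3 (conclusion).} By Theorem~\ref{thm:closedform}, $\bar{\lambda}=(z^{-r}-1)/(1-z)$, where $z=z(p,\beta,r)$. Consequently, any two payoff pairs $(\Dw,\Dl)$ and $(c\Dw,c\Dl)$, or more generally any two pairs sharing the same ratio $r$, give the same $\bar{\lambda}$.

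The main obstacle is a subtlety rather than a computation. The theorem's proof works with integer lattice states, so rescaling $(\Dw,\Dl)$ changes the lattice on which the recursion runs, and the standing hypotheses (growth condition, far-field optimality of risky) could in principle depend on more than $r$. I would handle this by stressing that the proposition concerns only the limiting quantity $\bar{\lambda}$ given by \eqref{eq:master}, whose validity for each admissible configuration is already supplied by Theorem~\ref{thm:closedform}. The statement is then simply that this formula factors through $r$. I would also note that $\Ds$ and $\VCAT$ drop out entirely, since they enter only through which configurations are admissible and not through the value of the limit.
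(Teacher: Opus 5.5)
Your proposal is correct and follows the paper's proof, which consists of the same substitution $z=(\rho^*)^{\Dw}$, using $\rho^{\Dl}=z^{-r}$, to rewrite both the characteristic equation and the formula for $\bar{\lambda}$ purely in terms of $(p,\beta,r)$. Two of your points are sound additions that the paper leaves implicit: your direct convexity argument that the root is unique in $z$, and your caveat that admissibility (the growth condition) depends on scale while the value of the limit does not.
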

\begin{proof}
The substitution $z = \rho^{\Dw}$ rewrites both the characteristic equation and the formula for $\bar{\lambda}$ in Theorem~\ref{thm:closedform} purely in terms of $(p, \beta, r)$.
\end{proof}

\begin{proposition}[Loss aversion always emerges for $r \geq 1$]
\label{prop:lambda1}
Under the assumptions of Theorem~\ref{thm:closedform} and $r \geq 1$, $\bar{\lambda} > 1$ strictly.
\end{proposition}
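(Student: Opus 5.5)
The plan is to work directly from the closed form in Theorem~\ref{thm:closedform}. The theorem already supplies that $z$ lies strictly in $(0,1)$, and I expect that this fact alone, together with $r \geq 1$, will suffice. In particular, I do not plan to use the characteristic equation $\beta[pz + (1{-}p)z^{-r}] = 1$ beyond the fact that its relevant root lies in $(0,1)$.

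\emph{Step 1 (reduce to an elementary inequality).} Since $z \in (0,1)$, the denominator $1 - z$ is strictly positive. Therefore
\begin{equation*}
\bar{\lambda} > 1 \iff z^{-r} - 1 > 1 - z \iff z^{-r} + z > 2 .
\end{equation*}

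\emph{Step 2 (remove the dependence on $r$).} For $z \in (0,1)$ the map $r \mapsto z^{-r}$ is nondecreasing, since $z^{-r} = e^{r\,|\ln z|}$ with $|\ln z| > 0$. Hence for $r \geq 1$ we have $z^{-r} \geq z^{-1}$. This gives $z^{-r} + z \geq z^{-1} + z$, so it suffices to handle the symmetric case $r = 1$.

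\emph{Step 3 (AM--GM).} For $z > 0$ the AM--GM inequality gives $z + z^{-1} \geq 2$, with equality only at $z = 1$. Because $z < 1$ strictly, the inequality is strict, so $z^{-r} + z \geq z^{-1} + z > 2$. Going back through Step 1, this yields $\bar{\lambda} > 1$.

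I do not expect a genuine obstacle here. The only point needing care is that strictness comes entirely from $z \neq 1$, that is, from $\beta < 1$ forcing the root of the characteristic equation to lie strictly inside $(0,1)$, as established in Theorem~\ref{thm:closedform}. It is also worth remarking that the argument already gives $\bar{\lambda} > 1$ at $r = 1$. This is exactly the ``boundary-only'' contribution mentioned in the introduction. The chain of inequalities also shows that increasing $r$ can only increase $z^{-r}$ at a fixed $z$, although $z$ itself moves with $r$, so monotonicity of $\bar{\lambda}$ in $r$ is a separate question that I would not attempt here.
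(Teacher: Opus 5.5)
Your proposal is correct and matches the paper's proof essentially step for step. Both arguments reduce $\bar{\lambda} > 1$ to $z^{-r} + z > 2$ (you make the use of $1 - z > 0$ explicit), get $z^{-1} + z > 2$ strictly from AM--GM since $z \neq 1$, and pass to $r \geq 1$ via $z^{-r} \geq z^{-1}$ for $z \in (0,1)$; the only difference is that you handle $r = 1$ and $r > 1$ together with one non-strict monotonicity step, whereas the paper treats $r > 1$ separately with a strict inequality.
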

\begin{proof}
By Theorem~\ref{thm:closedform}, $\bar{\lambda} > 1 \iff z^{-r} - 1 > 1 - z \iff z^{-r} + z > 2$. For $z \in (0,1)$, AM--GM gives $z^{-1} + z > 2$ strictly. For $r > 1$ and $z \in (0,1)$, $z^{-r} > z^{-1}$, so $z^{-r} + z > 2$ a fortiori.
\end{proof}

\paragraph{Validation.} Equation~\eqref{eq:master} achieves $R^2 = 0.999$ against the simulated $\bar{\lambda}$ plateaus from all 343 growth-environment value iteration solutions, with median relative error $0.14\%$. Four controlled experiments (Figure~\ref{fig:sensitivity}) isolate the role of each parameter. \emph{Discount factor $\beta$} (panel a): lower $\beta$ raises $\bar{\lambda}$, because risky-loss steps put catastrophe within one or two periods (its near-term contribution to $\Vstar$ is barely discounted), while risky-win steps yield payoff that accrues over many \emph{future} periods of continued play (all discounted by powers of $\beta$); impatience downweights the gain-side flow faster than the loss-side catastrophe penalty, raising the slope ratio. \emph{Win probability $p$} (panel b), varied via payoff spread at fixed $r$ and $\EE[\text{risky}]$, is the dominant driver: $\bar{\lambda}$ is monotonic in $p$, collapsing toward unity as $p \to 0.5^+$ and diverging as $p \to 1$. \emph{Asymmetry ratio $r$} (panel c) adds to the boundary contribution; even $r < 1$ produces $\bar{\lambda} > 1$ when $p > r/(1+r)$. \emph{Joint payoff scaling} (panel d) leaves $\bar{\lambda}$ invariant, confirming Proposition~\ref{prop:scale}. Probability structure $(p, \beta)$ thus drives $\bar{\lambda}$; payoff magnitude does not.

\begin{figure}[t]
\centering
\includegraphics[width=\textwidth]{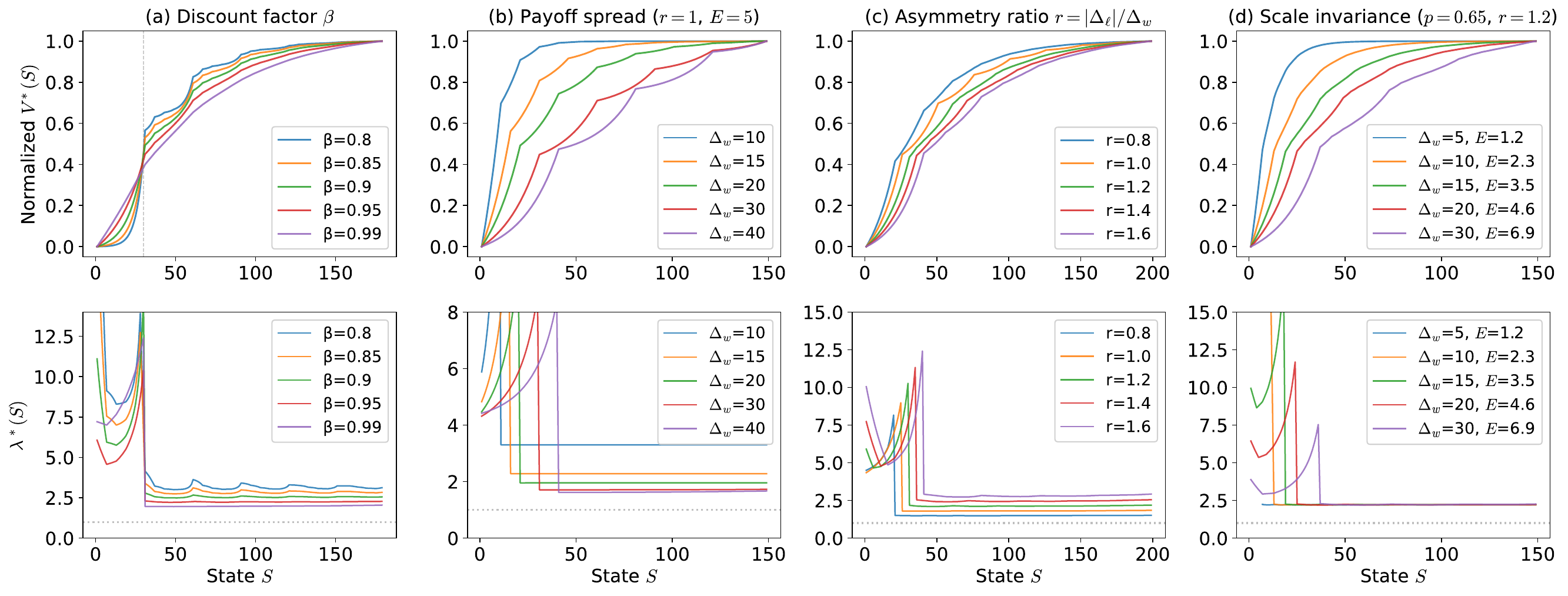}
\caption{\textbf{Parameter sensitivity of $\Vstar(S)$ (top) and $\lambdastar(S)$ (bottom).} Each column isolates one degree of freedom. \textbf{(a)}~Discount factor $\beta$. \textbf{(b)}~Payoff spread at fixed $r = 1$, $\EE[\text{risky}] = 5$: as $p \to 0.5$, $\bar{\lambda}$ falls. \textbf{(c)}~Asymmetry ratio $r$: $\bar{\lambda}$ scales with $r$; even $r < 1$ yields $\bar{\lambda} > 1$. \textbf{(d)}~Scale invariance.}
\label{fig:sensitivity}
\end{figure}

\subsection{Symmetric vs.\ asymmetric gambles}
\label{sec:symasym}

Theorem~\ref{thm:closedform} yields a clean decomposition of $\bar{\lambda}$ into a \emph{boundary contribution}, present even for symmetric payoffs, and an \emph{asymmetry contribution}, introduced by $r > 1$. Define
\begin{equation}
\bar{\lambda}_\text{sym}(p, \beta) := \bar{\lambda}(p, \beta, r{=}1), \qquad
\sigma(p, \beta, r) := \frac{\bar{\lambda}(p, \beta, r) - \bar{\lambda}_\text{sym}(p, \beta)}{\bar{\lambda}(p, \beta, r) - 1},
\label{eq:asym_share}
\end{equation}
the asymmetric plateau and the asymmetry share, respectively. The numerator is the increment from the symmetric to the asymmetric case at fixed $(p, \beta)$; the denominator normalizes by the gap above unity, so $\sigma$ measures the fraction of $\bar{\lambda} - 1$ attributable to $r > 1$.

\paragraph{Symmetric case (boundary alone).}
For $r = 1$, eq.~\eqref{eq:master} reduces to $\bar{\lambda}_\text{sym} = 1/z^*$, where $z^* \in (0,1)$ solves $\beta[pz + (1{-}p)/z] = 1$. By Proposition~\ref{prop:lambda1}, $\bar{\lambda}_\text{sym} > 1$ throughout the symmetric growth regime ($p > 1/2$). Worked values at $\beta = 0.95$: $1.55$ at $p = 0.55$, $2.14$ at $p = 0.65$, $4.34$ at $p = 0.80$, diverging as $p \to 1$. So an absorbing boundary applied to a perfectly symmetric risky/safe choice already produces a slope ratio of similar order to the empirical prospect-theory estimate $\lambda \approx 2.25$ \citep{tversky1992advances}---no preference or payoff asymmetry required.

\paragraph{Asymmetric case (boundary plus asymmetry).}
For $r > 1$, $\bar{\lambda}$ rises further. Table~\ref{tab:asym_decomp} reports descriptive statistics for $\sigma$ swept over $p \in \{0.55, 0.60, 0.65, 0.70, 0.75, 0.80\}$ and $\beta \in \{0.85, 0.90, 0.95, 0.99\}$ at three asymmetry levels (growth-feasible cells only).

\begin{table}[h]
\centering
\caption{Asymmetry share $\sigma$ (eq.~\ref{eq:asym_share}) across the $(p, \beta)$ grid, by asymmetry level $r$.}
\label{tab:asym_decomp}
\small
\begin{tabular}{cccccc}
\toprule
$r$ & $n$ & median $\sigma$ & mean $\sigma$ & IQR & range \\
\midrule
$1.25$ & $20$ & $4.6\%$ & $5.8\%$ & $[2.6\%, 8.7\%]$ & $[0.4\%, 13.5\%]$ \\
$1.50$ & $16$ & $8.3\%$ & $8.9\%$ & $[5.1\%, 12.8\%]$ & $[0.9\%, 18.6\%]$ \\
$2.00$ & $12$ & $13.9\%$ & $14.7\%$ & $[10.1\%, 20.4\%]$ & $[2.2\%, 26.5\%]$ \\
\bottomrule
\end{tabular}
\end{table}

In every one of the $48$ $(p, \beta, r)$ cells tested, the boundary contribution $\bar{\lambda}_\text{sym} - 1$ exceeds the asymmetry contribution $\bar{\lambda} - \bar{\lambda}_\text{sym}$. The asymmetry share grows monotonically with $r$ but stays below $30\%$ even at $r = 2$. The boundary therefore dominates $\bar{\lambda}$ across realistic parameters and across small to large asymmetry.

\paragraph{Connection to behavioral-economics empirics.}
This decomposition matches an empirical pattern. \citet{walasek2015makeloss} show experimentally that estimated $\lambda$ can be tuned from below unity to well above two by manipulating the symmetry of the choice set, and meta-analyses by \citet{brown2024meta} and \citet{yechiam2025loss} document the same dependence in observational data. Equation \eqref{eq:master} implies that, in any sequential setting with absorbing failure, effective loss aversion has a boundary-induced component determined by $(p, \beta)$ to which payoff asymmetry $r$ adds further; controlled symmetric gambles isolate the boundary, and asymmetric gambles measure the additional contribution.

\section{Robustness}
\label{sec:validation}

The theoretical claims so far were established under exact value iteration with deterministic transitions. We now show that they survive (i)~a model-free agent with no knowledge of the transition kernel, and (ii)~stochastic transitions whose distribution and channel of action we vary.

\subsection{Tabular Q-learning}
\label{sec:qlearning}

A tabular Q-learning agent ($\epsilon$-greedy with constant $\epsilon = 0.5$; learning rate $\alpha = 0.01$; $5 \times 10^{5}$ episodes; max $2{,}000$ steps per episode; optimistic Q-initialization at $\max(\EE[\text{risky}], \Ds)/(1-\beta)$; uniform random start state; episode terminates on entering $\SCAT$) with no knowledge of the transition model reproduces all qualitative features (Figure~\ref{fig:qlearning}). On the canonical Figure~\ref{fig:growth_example} growth and Figure~\ref{fig:decline_example} decline configurations with $\bar{S} = 1000$, Pearson correlation between $V^Q$ and $\Vstar$ is $0.980$ in growth and $1.000$ in decline. The S-shape, safe zone, hail Mary region, and $\lambdastar > 1$ all emerge from learning rather than from exact computation. We treat Q-learning as a qualitative reproduction of the mechanism; the closed-form formula~\eqref{eq:master} remains the recommended estimator when the transition model is known. The reproduction is bundled in the supplementary archive (\texttt{code/run\_qlearning\_figure.py}); each configuration runs in approximately $30$ seconds on a single CPU.

\begin{figure}[t]
\centering
\includegraphics[width=\textwidth]{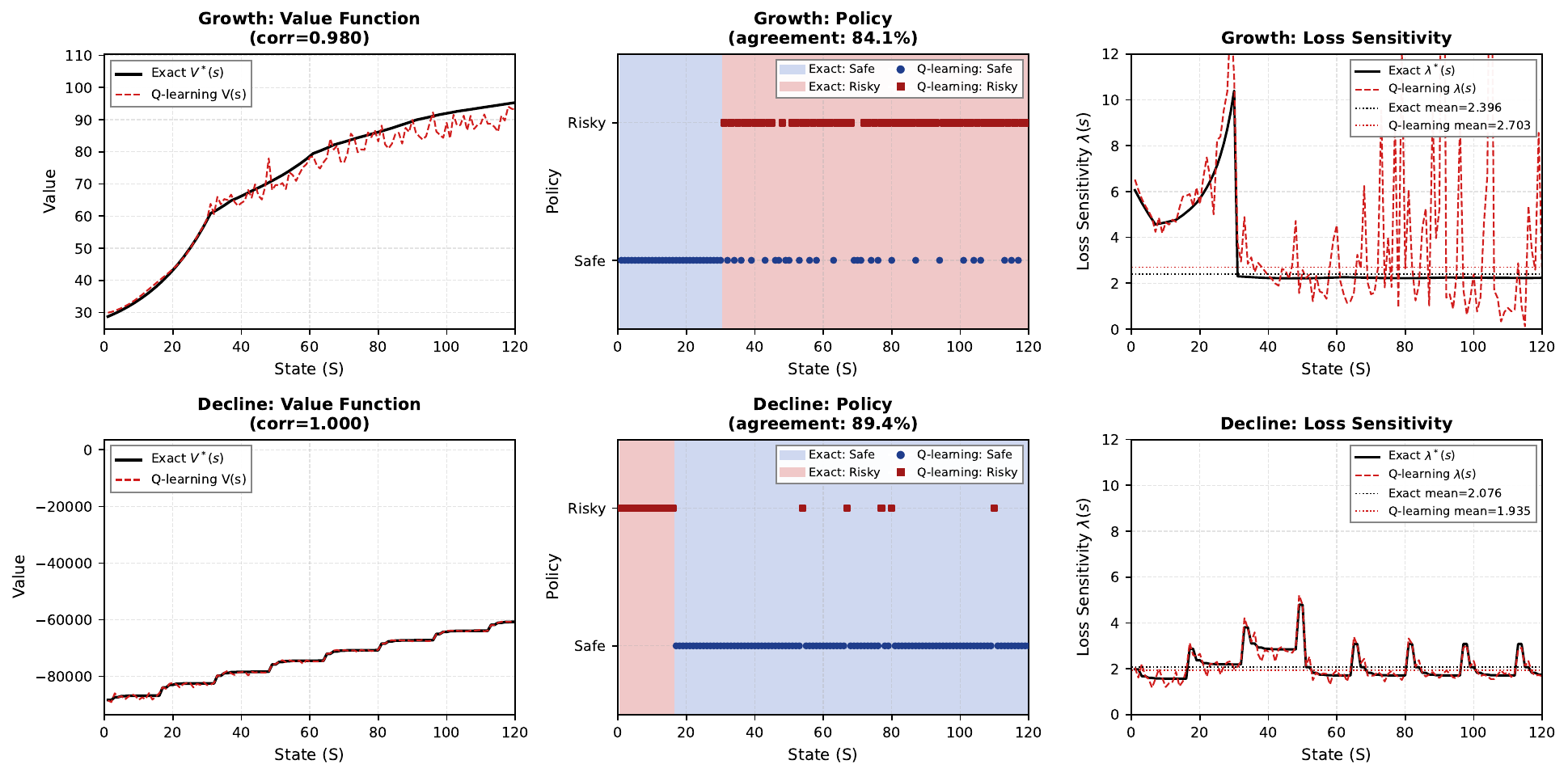}
\caption{\textbf{Q-learning reproduces all phenomena.} \emph{Top:} growth (correlation $0.980$). \emph{Bottom:} decline (correlation $1.000$). A model-free agent discovers the same S-shape, loss aversion, and policy structure.}
\label{fig:qlearning}
\end{figure}

\subsection{Stochastic transitions}
\label{sec:noise}

To check that the prospect-theory signatures do not depend on the deterministic transition model in \eqref{eq:bellman}, we perturb each transition's destination by additive integer-valued noise on the asymmetric baseline ($p = 0.65$, $\beta = 0.95$, $r = 1.25$, $\Dw = 8$, $\Dl = -10$, $\Ds = 1$, $\VCAT = -100$; closed-form prediction $\bar\lambda_\text{cf} \approx 2.23$). The integer state space is preserved by distributing probability mass over neighboring states: for nominal destination $S + \Delta_a + \xi$ with $\xi \sim F_\sigma$, the discrete transition kernel is $P(S'') = F_\sigma(S'' + 0.5) - F_\sigma(S'' - 0.5)$, truncated and renormalized. We sweep three noise families---Gaussian $\mathcal{N}(0,\sigma^2)$, heavy-tailed Student-$t_3$ rescaled to match Gaussian variance, and asymmetric skew-normal ($\alpha = 2$, variance-matched, zero-mean)---applied to (a)~the safe action only, (b)~risky only, or (c)~both, with $\sigma \in \{0, 0.10, 0.25, 0.50\}\cdot|\Dl|$.

Table~\ref{tab:noise} reports the maximum relative deviation of $\hat{\bar{\lambda}}$ from the noiseless closed-form prediction across the $\sigma$ sweep. Three patterns hold across all three noise distributions: (1)~safe-channel noise alone leaves $\bar{\lambda}$ essentially unchanged ($\leq 0.41\%$ even at $\sigma = 0.5|\Dl|$); (2)~risky-channel noise (alone or combined with safe) reduces $\bar{\lambda}$ monotonically, by at most $9.6\%$ at $\sigma = 0.5|\Dl|$; (3)~the qualitative S-shape, hail Mary region, and policy boundary are unchanged at every noise level. Qualitative behavior is invariant to the noise distribution; the quantitative plateau varies modestly (within $\approx 10\%$ at the largest noise level), and the difference between Gaussian, heavy-tailed, and asymmetric noise is small.

\begin{table}[h]
\centering
\caption{Maximum relative deviation of $\hat{\bar{\lambda}}$ from the closed-form prediction $\bar{\lambda}_\text{cf} \approx 2.23$ across $\sigma \in \{0, 0.10, 0.25, 0.50\}\cdot|\Dl|$. Asymmetric baseline ($p = 0.65$, $\beta = 0.95$, $r = 1.25$).}
\label{tab:noise}
\small
\begin{tabular}{lccc}
\toprule
Noise distribution & Safe only & Risky only & Both \\
\midrule
Gaussian                                    & $0.36\%$ & $\phantom{0}9.54\%$ & $\phantom{0}9.57\%$ \\
Student-$t_3$ (heavy-tailed)                & $0.41\%$ & $\phantom{0}9.00\%$ & $\phantom{0}8.98\%$ \\
Skew-normal (asymmetric, $\alpha = 2$)      & $0.38\%$ & $\phantom{0}9.35\%$ & $\phantom{0}9.30\%$ \\
\bottomrule
\end{tabular}
\end{table}

\section{Discussion}
\label{sec:discussion}

We have shown that an absorbing catastrophe boundary is structurally sufficient for three prospect-theory-like signatures to emerge under standard Bellman optimality, with linear rewards and no preference asymmetry. The claim is one of sufficiency, not exclusivity: probability weighting, framing, and reference dependence may still play independent roles in human behavior, and we do not argue that observed human loss aversion is itself optimal. The mechanism extends from literal catastrophes (bankruptcy, extinction) to effectively absorbing psychological or institutional thresholds \citep{koszegi2006model}. The decomposition in Section~\ref{sec:symasym} aligns with empirical evidence that measured loss aversion varies with payoff asymmetry \citep{brown2024meta, yechiam2025loss, walasek2015makeloss, gal2018loss, yechiam2013losses, mukherjee2017magnitude}: $\bar\lambda$ has a boundary-induced component determined by $(p, \beta)$ and an asymmetry component that depends on $r$, rather than a fixed psychological constant. For reinforcement learning, the result implies that conservative or desperate behavior near absorbing failure can emerge from continuation values alone, with no need to hand-code ``safe'' behavior into the agent's preferences.

\paragraph{Limitations and broader impact.} The model is one-dimensional with binary actions, known transitions, and a single boundary; probability weighting, framing, and the endowment effect are outside its scope, and the analogy between sequential MDP decisions and one-shot gambles is structural rather than literal. Empirical validation requires environments whose forward-looking drift can be experimentally manipulated; we leave this and extensions to higher dimensions, continuous actions, partial observability, and deep RL to future work. Recognizing that risk-seeking or risk-averse behavior near irreversible failure can be rational rather than pathological may inform safety-constraint design for RL systems and help interpret loss-averse human choices in high-stakes settings; we see no direct negative societal impact specific to this work.

\bibliographystyle{plainnat}

\newpage
\appendix

\section{Complete parameter summary}
\label{app:params}

\begin{table}[h]
\caption{Organization of 495 configurations across 14 experimental series.}
\label{tab:configs}
\centering
\small
\begin{tabular}{@{}llccl@{}}
\toprule
Series & Description & Qty & Key variables & Primary finding \\
\midrule
1 & Ratio sweep & 15 & $|\Dl/\Dw|$ & $\lambdastar$ scales with $r$ \\
2 & Symmetric payoffs & 15 & $r=1.0$, $\beta$, $E$ & Loss aversion without asymmetry \\
3A--C & Beta sweep & 57 & $\beta$ & Monotone decrease \\
4 & $\VCAT$ locality & 12 & $\VCAT$ & Zero effect on plateau \\
5 & State space & 8 & $\bar{S}$ & Stable for $\bar{S} \geq 200$ \\
7 & Drift sweep & 10 & $\Ds$ & Phase transition \\
8 & Phase transition & 15 & $\Ds = \EE[\text{risky}]$ & Sharp discontinuity \\
9--10 & Loss/Win sweep & 30 & $\Dl$, $\Dw$ & Validates ratio scaling \\
11 & Probability sweep & 15 & $p$ & Via expected payoff \\
17 & Heatmaps & 175 & $\beta \times r$, $\Ds \times r$ & Interaction effects \\
18 & $\EE[\text{risky}]$ sweep & 33 & $\EE[\text{risky}]$ & Secondary effect \\
19--20 & Interaction & 110 & $E \times r$, $E \times \beta$ & Separable contributions \\
\midrule
\textbf{Total} & -- & \textbf{495} & -- & -- \\
\bottomrule
\end{tabular}
\end{table}

\section{Value iteration algorithm}
\label{app:algorithm}

\noindent\textbf{Algorithm 1:} MDP value iteration.

\medskip
\noindent\fbox{\parbox{0.96\textwidth}{
\small
\textbf{Input:} $(\SCAT, \bar{S}, \VCAT, \Ds, \Dw, \Dl, p, \beta, \epsilon)$\\
\textbf{Output:} $\Vstar$, $\pi^*$, $\lambdastar$

\medskip
1.\ Initialize $V(S) \leftarrow 0$ for $S > \SCAT$;\; $V(\SCAT) \leftarrow \VCAT$.

2.\ \textbf{Repeat:}\\
\hspace*{1em}$V_{\text{old}} \leftarrow V$\\
\hspace*{1em}\textbf{For} $S = \SCAT + 1$ \textbf{to} $\bar{S}$:\\
\hspace*{2em}$\Qsafe \leftarrow \Ds + \beta \cdot V_{\text{old}}(\text{clamp}(S + \Ds, \SCAT, \bar{S}))$\\
\hspace*{2em}$\Qrisky \leftarrow p[\Dw + \beta \cdot V_{\text{old}}(\text{clamp}(S + \Dw, \SCAT, \bar{S}))] + (1{-}p)[\Dl + \beta \cdot V_{\text{old}}(\text{clamp}(S + \Dl, \SCAT, \bar{S}))]$\\
\hspace*{2em}$V(S) \leftarrow \max(\Qsafe, \Qrisky)$\\
\hspace*{1em}\textbf{Until} $\|V - V_{\text{old}}\|_\infty < \epsilon$.

3.\ Extract $\pi^*(S) = \arg\max(\Qsafe, \Qrisky)$ and $\lambdastar(S)$ via \eqref{eq:lambda}.

\medskip
where $\text{clamp}(x, \text{lo}, \text{hi}) = \min(\max(x, \text{lo}), \text{hi})$.
}}

\section{Additional figures}
\label{app:additional_figures}

\begin{figure}[h]
\centering
\includegraphics[width=\textwidth]{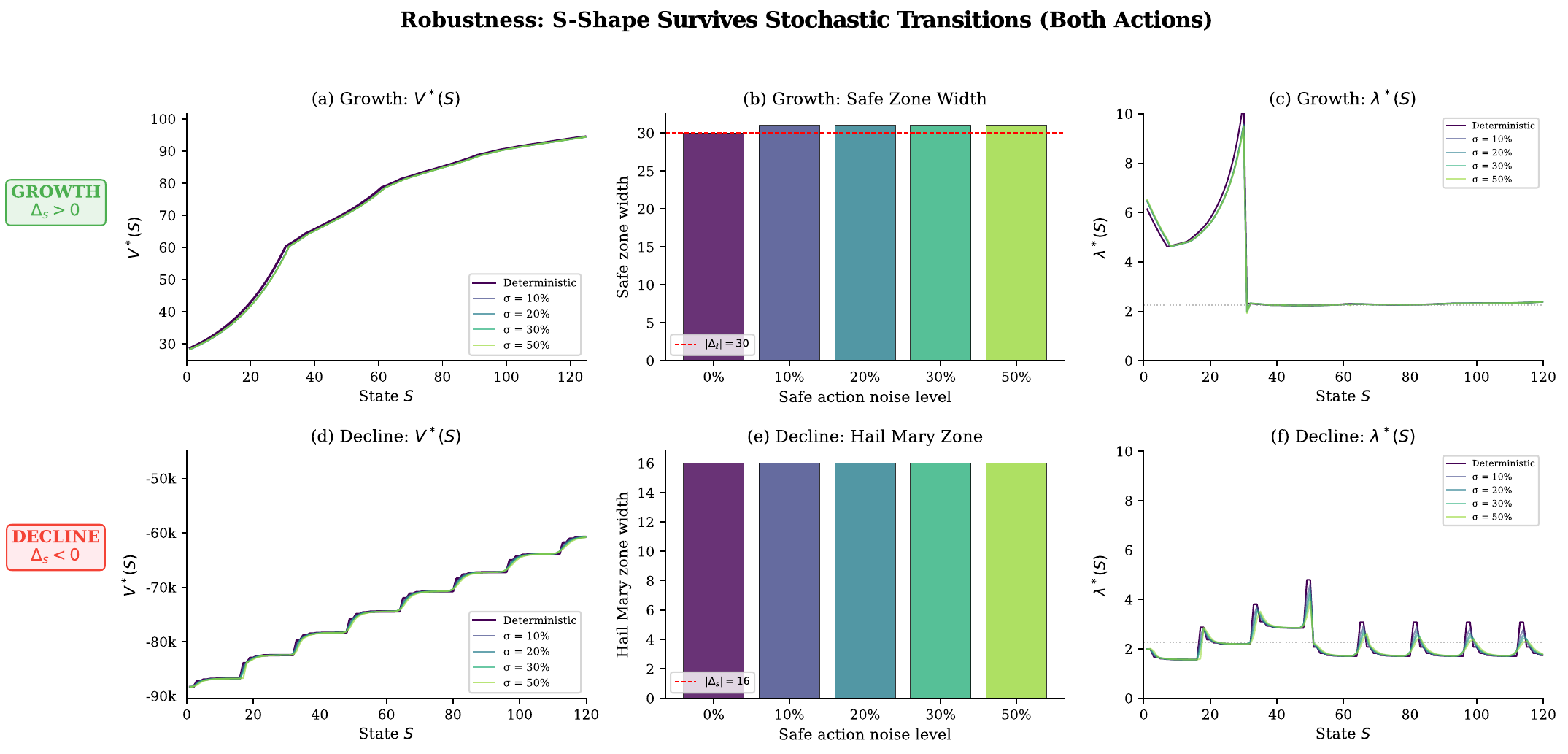}
\caption{\textbf{Robustness to stochastic transitions.} Safe-zone width, hail Mary zone width, and $\lambdastar$ are virtually unchanged from $0\%$ to $50\%$ Gaussian noise on transitions. Section~\ref{sec:noise} extends this analysis to heavy-tailed and asymmetric noise distributions and to per-channel sweeps; the quantitative summary is in Table~\ref{tab:noise}.}
\label{fig:robustness}
\end{figure}

\end{document}